\newtheorem{lemma}{Lemma}
\newcommand{\R}{\mathbb{R}}
  \providecommand\BibTeX{{%
    \normalfont B\kern-0.5em{\scshape i\kern-0.25em b}\kern-0.8em\TeX}}}
\begin{document}

\title{Fairness without Demographics through \\ Learning Graph of Gradients}

\author{Yingtao Luo}
\affiliation{
  \institution{Carnegie Mellon University}
  \city{Pittsburgh}
  \country{USA}
  }
\email{yingtaol@andrew.cmu.edu}

\author{Zhixun Li}
\affiliation{
  \institution{The Chinese University of Hong Kong}
  \city{Hong Kong}
  \country{China}
  }
\email{zxli@se.cuhk.edu.hk}

\author{Qiang Liu}
\authornote{To whom correspondence should be addressed.}
\affiliation{
  \institution{CRIPAC, MAIS,\\Institute of Automation, Chinese Academy of Sciences}
  \city{Beijing}
  \country{China}
  }
\email{qiang.liu@nlpr.ia.ac.cn}

\author{Jun Zhu}
\affiliation{
  \institution{Department of Computer Science and Technology, Institute for AI, THBI Lab, BNRist Center, Tsinghua-Bosch Joint ML Center, Tsinghua University}
  \city{Beijing}
  \country{China}
  }
\email{dcszj@tsinghua.edu.cn}


\begin{CCSXML}
<ccs2012>
   <concept>
       <concept_id>10010147.10010257</concept_id>
       <concept_desc>Computing methodologies~Machine learning</concept_desc>
       <concept_significance>500</concept_significance>
       </concept>
   <concept>
       <concept_id>10003456.10010927</concept_id>
       <concept_desc>Social and professional topics~User characteristics</concept_desc>
       <concept_significance>500</concept_significance>
       </concept>
 </ccs2012>
\end{CCSXML}

\ccsdesc[500]{Computing methodologies~Machine learning}
\ccsdesc[500]{Social and professional topics~User characteristics}

\keywords{Fairness, Demographics, Gradient, Graph}

\begin{abstract}
Machine learning systems are notoriously prone to biased predictions about certain demographic groups, leading to algorithmic fairness issues. Due to privacy concerns and data quality problems, some demographic information may not be available in the training data and the complex interaction of different demographics can lead to a lot of unknown minority subpopulations, which all limit the applicability of group fairness. Many existing works on fairness without demographics assume the correlation between groups and features. However, we argue that the model gradients are also valuable for fairness without demographics. In this paper, we show that the correlation between gradients and groups can help identify and improve group fairness. With an adversarial weighting architecture, we construct a graph where samples with similar gradients are connected and learn the weights of different samples from it. Unlike the surrogate grouping methods that cluster groups from features and labels as proxy sensitive attribute, our method leverages the graph structure as a soft grouping mechanism, which is much more robust to noises. The results show that our method is robust to noise and can improve fairness significantly without decreasing the overall accuracy too much.
\end{abstract}

\maketitle

\section{Introduction}
Fairness in machine learning has become an urgent concern, as machine learning systems can be biased against certain demographic groups, which contributes to the socioeconomic disparities in many areas such as healthcare \cite{gianfrancesco2018potential}, finance \cite{hajian2016algorithmic}, etc. For example, when learning the default risk of financial loan applicants, due to certain biases, the model prediction can be inaccurate for some protected groups, such as minorities. To deal with this issue, some approaches \cite{hashimoto2018fairness, sagawa2019distributionally, lahoti2020fairness} follow the Rawlsian Max-Min fairness \cite{rawls2004theory} to minimize the worst-case errors over all groups. Some other methods minimize the prediction accuracy gap between different demographic groups \cite{creager2021environment, rahman2022fair, chaifairness}. Most of these existing methods require sensitive attributes, such as race, gender, etc., to identify which group is discriminated against by machine learning models. However, due to privacy concerns, these sensitive attributes are often not accessible to algorithm developers. For example, certain regulations like the HIPAA privacy rule have set up safeguards, which now limit not only direct access but also indirect methods that infer sensitive attributes, as these can lead to re-identification risks and other privacy concerns. An additional challenge arises when dealing with numerous demographic attributes. Defining the intricate interplay among various demographic factors for minority subgroups beforehand can be complex, and the potential protected groups increase exponentially as the number of sensitive attributes grows. This, in turn, escalates the difficulty of identifying the most disadvantaged group. Consequently, it is crucial to advocate for machine learning fairness that does not rely on demographic information.

To ensure fairness without demographics, many existing methods with \textbf{proxy sensitive attributes} \cite{yan2020fair, grari2021fairness, du2021fairness, zhao2022towards} assume the \textit{correlation between sensitive attributes (groups) and non-sensitive attributes (features)}, perform clustering to obtain surrogate groups, and enforce group fairness. The problem with these methods, however, is the \textbf{difficulty to guarantee a large overlap of the proxy attributes with the real protected groups}, especially when many protected groups are unknown and the distributional discrepancies between sensitive attributes are large \cite{chaifairness}. Other methods such as ARL \cite{lahoti2020fairness} generate weights for different samples, but they can be \textbf{susceptible to noises (e.g., mislabelling)} when outliers are given superior weights due to their rarity in the data, and thus may lead to severe degradation of fairness metrics.

When these existing works focus on the correlation between features and groups, we argue that the model training information is also valuable for fairness. When a model encounters an underrepresented sample, we regard the model as uncertain if a substantial gradient updates the model parameters due to limited exposure to similar samples during training. In other words, the gradients of a model combine information of both \textit{input features and model error between different demographic groups}. We show that \textbf{gradients are more effective in representing sensitive attributes}, assuming that input features and model accuracy are strongly correlated. The assumption holds true widely, as model accuracy is largely determined by the quality of feature. As neural networks consist of many neural components, we also demonstrate that the last-layer gradient is sufficient for representing demographics.

In this paper, we develop an adversarial learning framework, which has a learner for the main task and an adversary for generating the sample weights to maximize the loss of the learner network. Then, we assign these weights to the minimization of learner loss. 
To alleviate the issues with surrogate groups and outlier noises, we leverage the gradients of the learner for grouping samples by demographics and construct a \textbf{graph of gradients (GoG)} where each sample is connected to the K-nearest samples with similar gradients. 
Through this graph, the weight of each sample is calculated as the aggregation of neighboring samples. This \textit{soft-grouping mechanism} mimics the effect of grouping similar samples without enforcing hard boundaries between demographic groups. 
Since the weight of each sample will be spread to neighbors, we \textbf{avoid assigning superior weights to noisy outliers}. In experimental evaluations, we verify that our method can successfully improve both fairness and the trade-off between fairness and accuracy significantly. 

To summarize, our contributions are listed as follows:
\begin{itemize}
\item We propose a fairness-without-demographics algorithm to mitigate the machine learning unfairness issue. The algorithm can scale up to large datasets of diverse complexities, structures and domains well.
\item We show that the gradients of a machine learning model are more effective in representing demographic groups than input features, which holds true as long as the deep learning learner is better than random guessing. We also show that the last-layer gradients are sufficient.
\item We propose to identify demographic subgroups by a graph of gradients (GoG) as a soft-grouping method, where each sample is connected to the K-nearest samples with similar gradients. GoG can address the issues with noisy outliers and is extremely efficient to train with linear complexity.
\item Extensive experiments on three public datasets and five baselines show that our method outperforms other algorithms significantly in both fairness and accuracy. The robustness study suggests that our method is more robust against noises. 
\end{itemize}

\section{Related Work}

\subsection{Group fairness for classification}
Group fairness is a concept that aims to ensure that the outcomes of an algorithm are equitable across different subpopulations defined by sensitive attributes, such as race, gender, etc. To alleviate the group disparity \cite{jiang2022generalized}, Equal Opportunity \cite{hardt2016equality} hopes that the true positive rates should be the same for all subpopulations, and Predictive Equality \cite{chouldechova2017fair} requires the equality of false positive rates. Preprocessing methods \cite{chen2018my, jang2021constructing} ensure that the data used for training is unbiased and representative of different subgroups by re-sampling, feature selection, etc. In-processing methods \cite{madras2018learning, iosifidis2019adafair, roh2021fairbatch, chaifairness, chai2022fairness} regularize the training process with fair constraints, sample reweighting, and adversarial training. Post-processing methods \cite{pleiss2017fairness, kimlearning, jang2022group} focus on adjusting the model prediction after training by threshold adjustment, calibration, etc., which are usually very efficient. However, to guarantee group fairness, the availability of sensitive information is a necessity. Some papers \cite{celis2021fair, celis2021fairb, giguere2022fairness} also address fairness concerns by using techniques that are robust to noisy or shifting sensitive attributes. Others propose methods to address fairness concerns in complex data structures \cite{dong2022edits, wang2022improving, dong2023fairness, dong2023interpreting, li2024rethinking}. While certain policies restrict versatility, we still need to count on fairness without demographic methods. 

\subsection{Fairness without demographics}
In light of the challenges to discovering the worst-off groups due to both the regulatory limitations and the complex interaction of many demographic variables \cite{shui2022learning}, increasing methods are proposed in recent years to achieve fairness without demographics. Some methods follow the Rawlsian Max-Min fairness \cite{rawls2004theory} to minimize the empirical risk of the group with the least utility. For example, Distributionally Robust Optimization (DRO) \cite{hashimoto2018fairness} proposes to use $\chi^2$-divergence to discover and minimize the worst-case distribution repeatedly, which essentially only focuses on the learning of the worst-off group. Adversarial Reweighted Learning (ARL) \cite{lahoti2020fairness} uses an adversary network to generate sample weights that maximize the empirical risk and performs weighted learning for the learner model. Based on the concept of computational identifiability, ARL hypothesizes that it can learn demographic information from data features and labels. Surrogate grouping methods \cite{zhao2022towards} are also proposed to minimize the correlation between data features and model prediction, or directly predict surrogate demographic groups \cite{yan2020fair} and then perform group fairness algorithms \cite{sagawa2019distributionally, rahman2022fair}. Some debiasing methods also propose to identify the group disparities based on clustering information and upsample the minority groups to balance the distribution \cite{chaifairness, kimlearning}. 

\section{Theoretical Analysis}

\subsection{Problem Formulation}
Consider data $(x,y,s)$ with $n$ samples, where $x$ represents the non-sensitive features, $y$ represents the labels, and $s$ represents the sensitive attributes that determine the protected groups. Then, given $x$, we need to predict $y$ without the knowledge of $s$ while satisfying certain fairness criteria with respect to $s$. For example, the disparate impact requires $\hat{y} \perp s$ that the model prediction is independent of sensitive attributes, the equalized odd requires the independence conditional on ground truth label $\hat{y} \perp s|y$, and the Rawlsian fairness maximizes the utility of the worst-off group 
\begin{align}
h^* = \arg \max\limits_{h \in H} \min\limits_{s \in S} U_s(h),
\end{align}
 where $h$ denotes the hypothesis from the hypothesis class $H$, $U_s$ denotes the accuracy of the model on group $s$, and $s \in S$ where $S$ denotes all the possible groups, e.g., $S=[\text{white men}, \text{black women}, ...]$. Here, we consider the multiclass classification task, therefore $y \in \{M\}$ where $M$ denotes the number of classes.

\subsection{Definition of gradient}
Gradients provide both the training information about the learner model that indicates the algorithmic bias and the data information that indicates the data bias, therefore can help achieve fairness. First, there is a correlation between non-sensitive features $x$ and sensitive attributes $s$. For example, the prevalence of diseases is often affected by factors such as diets, which are affected by demographics such as religions and cultures. Second, as long as the model prediction accuracy is unfair across different groups, $U(h)$ is also correlated to sensitive attributes $s$. 
Nevertheless, we do not have true demographics $s$ as the label, so we do not know $s$ as an estimated function of $x$ and $U(h)$ a priori. Therefore, we propose to use gradients to represent demographic groups.

We consider a neural network $h$ parametrized by $\theta$ as $h(x;\theta)=\hat{y}$, where $\theta = (W,V)$. Thus, we have $W=(W_1, ..., W_d)^\top \in \R^{D \times M}$ as the weight of the last layer where $D$ denotes the dimensionality of the last latent representation. $V$ is the weight of all previous layers. 
$h(x;\theta)=\sigma(W \times z(x;V))$, where $\sigma(z)_j=e^{z_j}/\sum_{d=1}^D e^{z_d}$. The last-layer gradient w.r.t. the cross entropy loss is calculated as
\begin{align}
\frac{\partial}{\partial W} L(h(x;\theta),y) = z(x;V) \times (\hat{y}-y),
\end{align}
where
\begin{align}
L(h(x;\theta),y) &= -\sum_d y_d \cdot \log(h(x;\theta)) \\
&= \log \left( \sum_{d=1}^D e^{W_d \cdot z(x;V)} \right) - W_y \cdot z(x;V).
\end{align}

Note that $\hat{y}-y$ is the bias of the model prediction, which can have a positive/negative value. We define the undirected gradient $g \in \R^{D \times M}$ of the last layer of $h$ by
\begin{align}
\label{undirected_gradient}
g_{d,j} = z(x)_d |\hat{y_j}-y_j| = z_d U_j,
\end{align}
which is the multiplication of the latent representation of the non-sensitive feature and the prediction error (i.e., alternative to model accuracy $U$) of the label class. Here $y_j$ denotes the true value of the $j$-th class in the label.
We show that the undirected gradient to estimate the sensitive attributes is more accurate by both theoretical analysis and experimental verification.

\subsection{Relationship between gradients and sensitive demographics}
In this section, we discuss and theoretically analyze the effectiveness of gradients to represent sensitive demographics. We first generally demonstrate, through the lens of information theory and as articulated in Theorem \ref{theorem1}, that the distribution of gradients that combines the information of both input features and model error more closely aligns with sensitive demographic attributes as compared to input features, if input features are not perfect solutions for identifying demographics. Further, we explore under the condition of linear relationships, as outlined in Lemma \ref{lemma1}, that model gradients are more effective than input features in a larger correlation between input features and model prediction error.

\begin{theorem}
\label{theorem1}
The mutual dependence between gradients (the combination of input features and model error) and sensitive attributes is larger than the mutual dependence between input features and sensitive attributes. If we denote sensitive demographics as s, model prediction error as U, and input features as x, I(xU|s) > I(x|s).
\end{theorem}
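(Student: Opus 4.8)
The plan is to read the claimed inequality $I(xU|s) > I(x|s)$ as a comparison of mutual information (the vertical bar standing in for the usual argument separator), where the gradient $g$ of Eq.~\eqref{undirected_gradient} is treated as the carrier of the \emph{joint} information contained in the pair $(x,U)$ --- namely the latent feature representation $z(x)$ together with the per-class prediction error $U$. Under this reading the target becomes $I\bigl((x,U);s\bigr) > I(x;s)$, and the whole argument reduces to showing that appending $U$ to $x$ strictly increases the dependence on $s$.

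First I would invoke the chain rule for mutual information, which is an exact identity:
\begin{align}
I\bigl((x,U);s\bigr) = I(x;s) + I(U;s\mid x).
\end{align}
Rearranging gives $I\bigl((x,U);s\bigr) - I(x;s) = I(U;s\mid x)$. Since conditional mutual information is nonnegative, the inequality $I\bigl((x,U);s\bigr) \geq I(x;s)$ holds unconditionally, and it is strict precisely when $I(U;s\mid x) > 0$, i.e.\ when $U$ and $s$ are not conditionally independent given $x$. So the entire content of the theorem is concentrated in establishing this strict conditional dependence.

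The main obstacle is turning the two informal modeling premises into the single statement $U\not\perp s\mid x$. I would argue as follows: the error $U=|\hat{y}-y|$ is a function of $(x,y)$ and hence, conditioned on $x$, still fluctuates with the label $y$; because the learner is unfair --- its accuracy differs across groups, so $U$ is correlated with $s$ --- and because the features $x$ are, by assumption, \emph{not} a perfect solution for identifying demographics (they do not screen off $s$ from the residual label information), this group-dependent error signal survives after conditioning on $x$. That is exactly the condition $I(U;s\mid x)>0$, giving the strict inequality. I would present these as the explicit hypotheses under which the theorem holds, since without some such nondegeneracy $U$ could in principle be a deterministic function of $x$ and the gap would collapse to zero.

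Finally I would address one subtlety that the compressed notation $xU$ invites. If one insists the gradient be the single product $g=g(x,U)$ rather than the pair, then the data-processing inequality only yields $I(g;s)\le I\bigl((x,U);s\bigr)$, which points the wrong way for a lower bound. I would therefore either state the theorem at the level of the pair $(x,U)$ --- justified because the gradient $z_dU_j$ retains both factors --- or add the mild assumption that the map $(x,U)\mapsto g$ is essentially injective on the support of interest, so that no information about $s$ is lost in forming the gradient and $I(g;s)=I\bigl((x,U);s\bigr)$. Either route closes the argument, with the chain-rule identity doing the structural work and the conditional-dependence hypothesis supplying the strictness.
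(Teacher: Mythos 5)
Your proposal is correct and follows essentially the same route as the paper: the paper's entropy manipulation $(H(xU)-H(x))-(H(xU|s)-H(x|s)) = H(U|x)-H(U|xs)$ is exactly your chain-rule identity $I\bigl((x,U);s\bigr)-I(x;s)=I(U;s\mid x)$, and both arguments reduce the theorem to the same nondegeneracy hypothesis $I(U;s\mid x)>0$ justified by the learner's group-dependent error. Your closing remark about the product $xU$ versus the pair $(x,U)$ and the data-processing direction is a genuine subtlety the paper leaves implicit, but it does not change the shared structure of the argument.
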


\begin{proof}
\label{proof1main}
To calculate the mutual information $I(xU|s)$ between $xU$ and $s$, where $xU$ is the combination of both input features and model error as exemplified in Eq. \ref{undirected_gradient}, we have
\begin{align}
I(xU|s)
&=H(xU)-H(xU|s),
\end{align}
where 
$H(xU)$ is the marginal entropy of $xU$, $H(x|s)$ is the conditional entropy.
Similarly, we can calculate the mutual information $I(x|s)$ between $x$ and $s$ as 
\begin{align}
I(x|s)
&=H(x)-H(x|s).
\end{align}
Since $U$ is influenced by, or is a function of, $x$ and $s$ (different input features and demographics groups lead to different model error), we have $I(U;s|x)>0$. This means that $s$ introduces variations in $U$ that $x$ by itself cannot account for. This condition holds because $x$ and $s$ are not perfectly dependent on each other, that is, $I(U;s|x)>0$. Therefore, we have $H(U|x)-H(U|xs)>0$. 

Subtracting $I(xU|s)$ and $I(x|s)$, we have
\begin{align}
&\quad I(xU|s)-I(x|s)\nonumber \\
&=(H(xU)-H(x))-(H(xU|s)-H(x|s))\nonumber \\
&=H(U|x)-H(U|xs)\nonumber \\
&>0.
\end{align}

\end{proof}

Then, we demonstrate that, as a special case of Theorem \ref{theorem1}, the gradient of the last layer of a deep neural network can represent sensitive demographics better than input features alone. We first introduce Lemma \ref{lemma1}. Based on Lemma \ref{lemma1}, we show by Proposition \ref{pro1} that the last-layer gradient of a deep learning model is sufficient for the estimation of demographic groups. Intuitively, the last layer can capture high-level patterns in the data to make predictions, which is enough for representing sensitive demographics. Here, Pearson correlation in Lemma \ref{lemma1} is used as an example to illustrate our point. Specific nonlinear correlation is much harder to measure and analyze. Without further assumptions or knowledge about the data, there could be many possible nonlinear relationships (e.g., logarithmic, polynomial, exponential) and nonlinear correlation measurements (e.g., Hilbert-Schmidt Independence Criterion, Mutual Information, Maximal Information Coefficient).

\begin{lemma}
\label{lemma1}
$\frac{Corr(xU,s)}{Corr(x,s)}$ increases when $Corr(x, U)$ increases.
\end{lemma}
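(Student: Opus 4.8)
The plan is to reduce everything to second- and higher-order moments and then track how they move as $Corr(x,U)$ grows. First I would expand both correlations by the Pearson definition and cancel the common factor $\sigma_s$, so that the target ratio becomes
\begin{align}
\frac{Corr(xU,s)}{Corr(x,s)} = \frac{Cov(xU,s)\,\sigma_x}{Cov(x,s)\,\sigma_{xU}}.
\end{align}
This isolates the two quantities that actually depend on the feature--error relationship: the cross-covariance $Cov(xU,s)$ in the numerator and the standard deviation $\sigma_{xU}$ of the product in the denominator. Because $xU$ is a product, both involve moments beyond second order --- $Cov(xU,s)$ needs $E[xUs]$ and $\sigma_{xU}$ needs $E[x^2U^2]$ --- so the core of the argument is to express these in a way that exposes their dependence on $Corr(x,U)$.

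Next I would make $Corr(x,U)$ an explicit, tunable parameter by positing the regression decomposition $U = a + bx + c\,\xi$, where $\xi$ is independent of $(x,s)$ with $E[\xi]=0$ and unit variance; here $Corr(x,U)$ increases precisely as the independent-noise scale $c$ shrinks toward $0$. Substituting $xU = ax + bx^2 + c\,x\xi$, the independence and zero mean of $\xi$ kill its contribution to the covariance, giving $Cov(xU,s) = a\,Cov(x,s) + b\,Cov(x^2,s)$, which does \emph{not} depend on $c$. The same independence makes the cross terms in the variance vanish, so $Var(xU) = Var(ax+bx^2) + c^2\,E[x^2]$, which is strictly increasing in $c^2$. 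Composing the two monotonicities --- $\sigma_{xU}$ strictly decreasing as $c\downarrow 0$ while the numerator stays fixed, and $Corr(x,U)$ strictly increasing as $c\downarrow 0$ --- yields that the ratio is strictly increasing in $Corr(x,U)$, which is the claim.

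The step I expect to be the main obstacle is pinning down the \emph{direction} of the effect, because it is genuinely sensitive to what ``increasing $Corr(x,U)$'' is taken to mean. If one instead raises the correlation by reallocating covariance at a \emph{fixed} marginal variance $\sigma_U$ (e.g. in a jointly Gaussian model), then $\sigma_{xU}$ can grow with the correlation and the ratio can move the other way; so the proof must commit to the noise-reduction reading above and verify that $Cov(xU,s)$ really is insensitive to it. A secondary difficulty is controlling the sign of the numerator: the term $b\,Cov(x^2,s)$ brings in a third moment of $x$, so to guarantee $Cov(xU,s)$ and $Cov(x,s)$ share a sign (and the ratio stays positive and meaningful) I would impose a consistent-sign condition on the correlations of $x$ and $U$ with $s$, matching the regime the lemma is intended for.
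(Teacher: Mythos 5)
Your proof is correct and rests on the same underlying strategy as the paper's: fix a linear model, reduce ``increasing $Corr(x,U)$'' to shrinking a single independent noise scale, and show that the ratio and $Corr(x,U)$ are co-monotone in that scale. The decomposition differs, though, in a way worth comparing. The paper writes two generative equations for $s$, namely $s=ax+\epsilon_a$ and $s=bU+\epsilon_b$ with $x\sim N(0,1)$, solves for $U=\frac{a}{b}x+\frac{\epsilon_a-\epsilon_b}{b}$, and grinds out every moment explicitly (using $E[x^3]=0$, $E[x^4]=3$) to reach closed forms $Corr(x,U)=a/\sqrt{a^2+\sigma_a^2+\sigma_b^2}$ and $Ratio=(\mu_a-\mu_b)/\sqrt{2a^2+\sigma_a^2+\sigma_b^2}$, then observes both are decreasing in $\sigma_a^2+\sigma_b^2$. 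You instead regress $U$ on $x$ directly, $U=a+bx+c\xi$ with $\xi$ independent of $(x,s)$, and isolate the mechanism abstractly: $Cov(xU,s)$ is insensitive to $c$ while $\sigma_{xU}$ grows with it. This buys two things: it exposes \emph{why} the monotonicity holds (only the denominator of the ratio moves as the noise scale changes), and it does not require $x$ to be standard normal, since you keep $Cov(x^2,s)$ symbolic where the paper needs $E[x^3]=0$ to kill it. It also happens to sidestep a small slip in the paper's computation, which takes $Var((\epsilon_a-\epsilon_b)x)=\sigma_a^2+\sigma_b^2$ and thereby drops the $(\mu_a-\mu_b)^2$ contribution arising from the nonzero mean of $\epsilon_a-\epsilon_b$; your $Var(xU)=Var(ax+bx^2)+c^2E[x^2]$ is exact because your $\xi$ is centered. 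The two caveats you flag are genuine but are equally present (and unacknowledged) in the paper: its ratio is positive and monotone only when $\mu_a>\mu_b$ and $a,b>0$, which is your consistent-sign condition, and it likewise commits silently to the noise-reduction reading of ``increasing $Corr(x,U)$'' by holding $a$, $b$, $\mu_a$, $\mu_b$ fixed while varying only $\sigma_a^2+\sigma_b^2$.
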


\begin{proof}
\label{proof2main}
Here, we assume the linearity of the data pattern for the simplicity of explanation. We hypothesize that there is a correlation between $x$ and $s$, and also a correlation between $U$ and $s$. For simplicity, we assume $s=ax+\epsilon_a$, where $a$ is the linear coefficient and $\epsilon_a$ is the noise term representing the part in $s$ that is uncorrelated to $x$. For simplicity, we can assume $\epsilon_a \sim N(\mu_a, \sigma_a^2)$. Similarly, we have $s=bU+\epsilon_b$ where $b$ is the linear coefficient and $\epsilon_b \sim N(\mu_b, \sigma_b^2)$. The two noise terms $\epsilon_a$ and $\epsilon_b$ are unknown and statistically independent. We assume that $x$ follows the standard normal distribution after preprocessing, thus $\mu_X=0, \sigma_X^2=1$. 

Note that we can rearrange the given equalities as follows:

$U=\frac{s-\epsilon_b}{b}$ and $x=\frac{s-\epsilon_a}{a} \rightarrow U=\frac{s-\epsilon_b}{b}=\frac{a}{b}x+\frac{\epsilon_a-\epsilon_b}{b}$ and $ 
xU=\frac{a}{b}x^2+\frac{\epsilon_a-\epsilon_b}{b}x.$

The covariance between $x$ and $s$ is
\begin{align}
Cov(x,s)&=Cov(x, ax+\epsilon_a) \\
&=a \cdot Cov(x, x)+Cov(x, \epsilon_a) \\
&=a,
\end{align}
where $Cov(x, \epsilon_a) = 0$ when the noise is independent of the feature. 

The correlation coefficient between $x$ and $s$ is
\begin{align}
Corr(x,s)&=\frac{Cov(x, s)}{\sqrt{Var(x)}\sqrt{Var(s)}} =\frac{a}{\sqrt{a^2+\sigma_a^2}}.
\end{align}

Similarly, we can calculate 
\begin{align}
Cov(x,U)&=E[x(\frac{a}{b}x+\frac{\epsilon_a-\epsilon_b}{b})]-0=\frac{a}{b}, \\
Corr(x, U)&=\frac{a}{\sqrt{a^2+\sigma^2_a+\sigma^2_b}}, \\
Var(s)&=Var(ax + \epsilon_a) = a^2 + \sigma_a^2, \\
Cov(U,s)&=Cov((1/b)(s - \epsilon_b), s) = \frac{a^2+\sigma_a^2}{b}, \\
Var(U)&=Var(\frac{a}{b}x)+Var(\frac{\epsilon_a-\epsilon_b}{b}) = \frac{a^2+\sigma^2_a+\sigma^2_b}{b^2}, \\
Corr(U,s)&=\frac{Cov(U,s)}{\sqrt{Var(U)}\sqrt{Var(s)}} =\frac{\sqrt{a^2+\sigma_a^2}}{\sqrt{a^2+\sigma_a^2+\sigma_b^2}}.
\end{align}
From observation, we find that when $\sigma^2_a+\sigma^2_b=0$, $Corr(x,U)=Corr(x,s)=Corr(U,s)=1$, regardless of the value of $a, b, \mu_a, \mu_b$. Since we regard $a$ as a constant that is not subject to change, we can conclude that $\sigma^2_a$ and $\sigma^2_b$ can directly determine the correlation.

We have the correlation between $xU$ and $s$ as
\begin{align}
Cov(xU, s) &= Cov\left(\frac{a}{b}x^2+\frac{\epsilon_a-\epsilon_b}{b}x, s\right) \\
&= \frac{a}{b}Cov(x^2, ax+\epsilon_a) \nonumber  + \frac{1}{b}Cov((\epsilon_a-\epsilon_b)x, ax+\epsilon_a) \\
&= \frac{a^2}{b}Cov(x^2, x) + \frac{1}{b}Cov((\epsilon_a-\epsilon_b)x, \epsilon_a) \nonumber \\
&\quad \   + \frac{1}{b}Cov((\epsilon_a-\epsilon_b)x, ax) \\
&= \frac{a^2}{b}(E[x^3] - E[x]E[x^2]) \nonumber  + \frac{aE(\epsilon_a-\epsilon_b)E(x^2)}{b} \\
&= \frac{a(\mu_a - \mu_b)}{b},
\end{align}
where, by the moments of standard normal distribution, $E(x^4) = 3$, $E(x^3) = 0$, and $E(x^2) = 1$. 

Then, we compute 
\begin{align}
    Var(xU)
    &=Var\left(\frac{a}{b}x^2+\frac{\epsilon_a-\epsilon_b}{b}x\right) \\
    &=\frac{a^2}{b^2}Var(x^2) +\frac{1}{b^2}\text{Var}(\epsilon_a - \epsilon_b) \text{Var}(x)\\
    &=\frac{a^2}{b^2}Var(x^2) \nonumber  +\frac{1}{b^2}(\text{Var}(\epsilon_a) + \text{Var}(\epsilon_b) - 2\text{Cov}(\epsilon_a, \epsilon_b)) \\
    &=\frac{2a^2+\sigma^2_a+\sigma^2_b}{b^2}
\end{align}
where $Var(x^2)=E(x^4)-E(x^2)^2=3-1=2$.

Therefore, the correlation coefficient between $xU$ and $s$ is
\begin{align}
Corr(xU,s)&=\frac{Cov(xU, s)}{\sqrt{Var(xU)}\sqrt{Var(s)}} \\
&=\frac{a(\mu_a - \mu_b)}{b\sqrt{a^2+\sigma_a^2}\sqrt{\frac{2a^2+\sigma^2_a+\sigma^2_b}{b^2}}},
\end{align}

Comparing $Corr(x,s)$ and $Corr(xU,s)$, we have
\begin{align}
Ratio=\frac{Corr(xU,s)}{Corr(x,s)}=\frac{\mu_a - \mu_b}{\sqrt{2a^2+\sigma^2_a+\sigma^2_b}}.
\end{align}

We can tell that both $Corr(x, U)$ and $Ratio$ are directly dependent on and are decreasing in $\sigma^2_a+\sigma^2_b$. Therefore, $\frac{Corr(xU,s)}{Corr(x,s)}$ increases in $Corr(x, U)$. 
\end{proof}

\begin{proposition}
\label{pro1}
The last-layer gradient of the deep learning prediction model can have a stronger correlation to sensitive attributes than non-sensitive input features. If we denote input features as $x$, model prediction error as $U$, last-layer representation as $z$, and sensitive attribute classes as $s$, we have $Corr(zU, s) > Corr(x, s)$.
\end{proposition}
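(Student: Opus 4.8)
The plan is to treat this proposition as the instantiation of Lemma \ref{lemma1} in which the generic ``feature'' is taken to be the last-layer representation $z$ rather than the raw input $x$, and then to argue that $z$ is a strictly better correlate of the prediction error $U$ than $x$ is. Concretely, I want to chain two inequalities, $Corr(zU,s) > Corr(z,s)$ and $Corr(z,s) \ge Corr(x,s)$, and conclude $Corr(zU,s) > Corr(x,s)$.

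First I would exploit the deterministic dependency structure of the network. Since $h(x;\theta) = \sigma(W z(x;V))$, the prediction $\hat{y}$, and hence the error $U = |\hat{y}-y|$, depends on $x$ only through $z = z(x;V)$. This yields a factorization $x \to z \to \hat{y} \to U$, so that $z$ is a sufficient encoding for $U$: every bit of information $x$ carries about the error is already present in $z$. Consequently $Corr(z,U) \ge Corr(x,U)$, with strict inequality precisely when the feature extractor $V$ does better than random guessing, i.e. when $z$ concentrates the task-relevant signal that drives the error. This is exactly the regime the paper assumes.

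Next I would re-run the linear surrogate model of Lemma \ref{lemma1} with $z$ in the role of the feature, writing $s = a'z + \epsilon_{a'}$ and $s = bU + \epsilon_b$. The identical algebra then gives $\tfrac{Corr(zU,s)}{Corr(z,s)}$ as an increasing function of $Corr(z,U)$. Because $Corr(z,U) > Corr(x,U)$ from the first step, the monotonicity established in Lemma \ref{lemma1} forces the ratio for $z$ to exceed the ratio for $x$; once the learner is good enough this ratio crosses one, delivering $Corr(zU,s) > Corr(z,s)$. The final step is to compare $Corr(z,s)$ with $Corr(x,s)$, which I would close using the paper's standing assumption that input features and model accuracy are strongly correlated, together with the fact that $z$ is a faithful re-encoding of the task-relevant part of $x$, to obtain $Corr(z,s) \ge Corr(x,s)$.

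The hard part will be this last comparison. The representation $z$ is optimized to predict $y$, not to preserve the $x$--$s$ correlation, so a network could in principle become partially invariant to $s$ and drive $Corr(z,s)$ below $Corr(x,s)$. Justifying $Corr(z,s) \ge Corr(x,s)$ rigorously therefore requires either the explicit assumption that $z$ retains the sensitive-correlated directions of $x$ (natural when accuracy and features are tightly coupled, so that the very directions predictive of $y$ are the ones correlated with $s$), or a quantitative argument within the linear model showing that the gain in $Corr(z,U)$ dominates any loss in $Corr(z,s)$. Making this trade-off precise, rather than leaning on the qualitative ``can have a stronger correlation'' phrasing, is the main obstacle.
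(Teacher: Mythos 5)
Your proposal is correct to the same standard of rigor as the paper's own argument and uses the same two essential ingredients --- the monotonicity of Lemma \ref{lemma1} and an assumed advantage of $z$ over $x$ --- but it chains them through a different intermediate quantity. The paper pivots through $Corr(xU,s)$: it directly \emph{assumes} $Corr(z,s) > Corr(x,s)$ (justified only by the remark that a trained representation should correlate with the target better than raw inputs), applies Lemma \ref{lemma1} to get $Corr(zU,s) > Corr(xU,s)$, and then invokes the linear-case reading of Theorem \ref{theorem1} to get $Corr(xU,s) > Corr(x,s)$. You instead pivot through $Corr(z,s)$: you first argue $Corr(z,U) \ge Corr(x,U)$ from the Markov structure $x \to z \to \hat y \to U$, use Lemma \ref{lemma1} with $z$ in the feature role to get $Corr(zU,s) > Corr(z,s)$, and close with $Corr(z,s) \ge Corr(x,s)$. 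Two remarks on the comparison. First, your sufficiency step buys you something the paper does not attempt --- a structural reason why $z$ should out-correlate $x$ with respect to $U$ --- but be aware that a data-processing inequality holds for mutual information, not for Pearson correlation: a deterministic nonlinear map can reduce linear correlation, so $Corr(z,U) \ge Corr(x,U)$ is itself an assumption dressed as a derivation. Second, the obstacle you flag at the end ($Corr(z,s)$ could drop below $Corr(x,s)$ if the network becomes invariant to $s$) is exactly the assumption the paper silently makes in its first sentence, so your candor here identifies the real weak point of the published proof rather than a defect unique to your route.
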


\begin{proof}
\label{proof3main}
We can assume that the correlation between the last-layer representation and the label is larger than the correlation between the input features and the label, i.e., $Corr(z, s) > Corr(x, s)$. This assumption is very likely to hold in practice because the purpose of a neural network is to learn a representation to make it easier to predict the label. As long as the neural network is effectively learning representations, this assumption holds. According to Lemma \ref{lemma1}, we have
\begin{align}
\frac{Corr(zU, s)}{Corr(x, s)} > \frac{Corr(xU, s)}{Corr(x, s)},
\end{align}
which means $Corr(zU, s) > Corr(xU, s)$.
According to Theorem \ref{theorem1}, in general, $I(xU|s)>I(x|s)$. Under linear relationship, we have $Corr(xU,s)>Corr(x,s)$. In this case, $Corr(zU,s)>Corr(x,s)$.
\end{proof}

By Proposition \ref{pro1}, we show that the last-layer gradient of a deep learning model is more effective in identifying the demographic groups than the commonly used non-sensitive features. Based on this property, we develop an architecture in the next section to improve fairness without demographics.

\section{Methodology}
\label{Sec: method}

\begin{figure*}[t]
\centering
\includegraphics[width=0.7\linewidth]{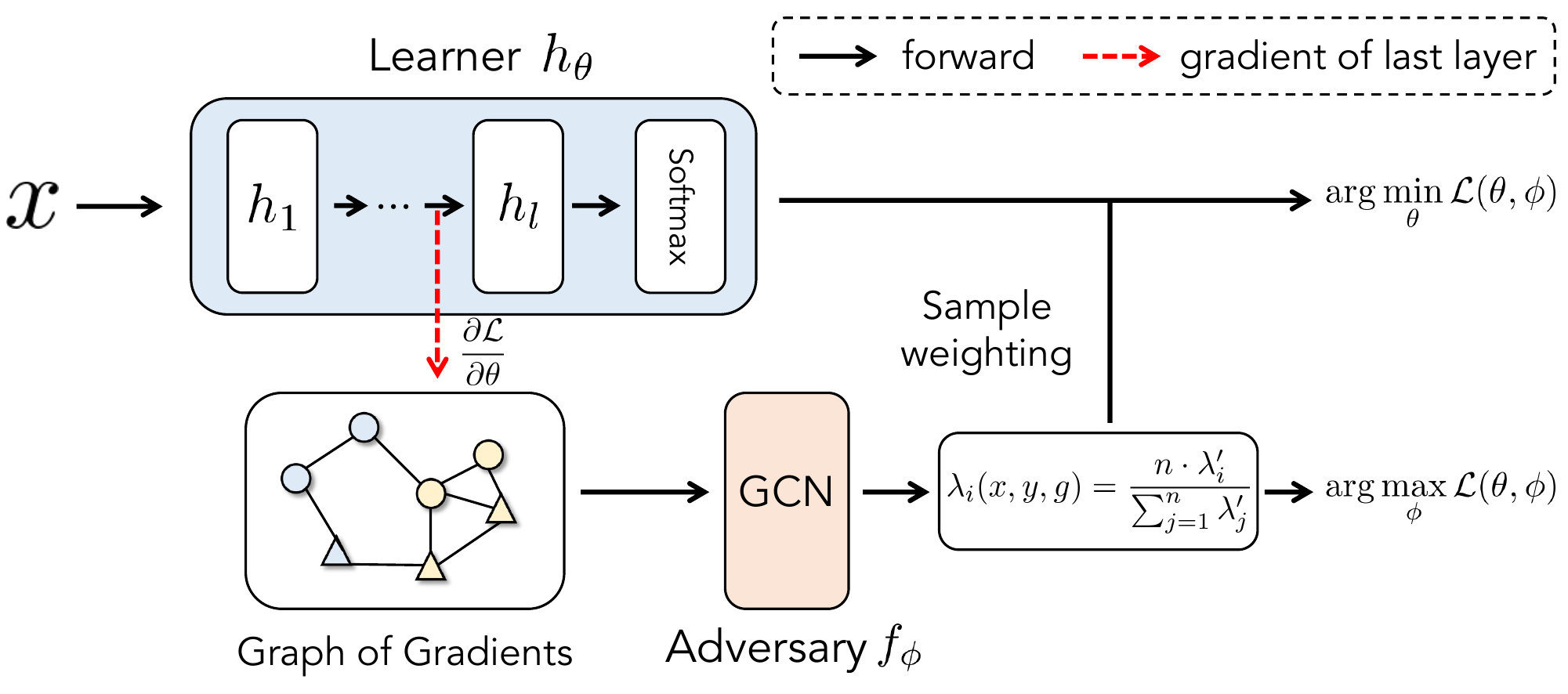}
\caption{The proposed Graph of Gradients algorithm framework.}
\label{fig:diagram}
\end{figure*}

In this section, we introduce the framework of our algorithm. 
Given the Rawlsian Max-Min fairness objective \cite{rawls2004theory} to maximize the utility of the worst-off group, how do we achieve it as a loss function? In the literature \cite{lahoti2020fairness}, it is proven that this fairness objective can be formulated as
\begin{align}
h^* &= \arg \mathop{min}\limits_{\theta} \mathop{max}_{\lambda} \mathcal{L} (\theta, \lambda) \\
&= \arg \mathop{min}_{\theta} \mathop{max}_{\lambda} \sum_{i=1}^n  \lambda_{s_i} L (h(x_i;\theta),y_i).
\end{align}
where $n$ denotes the number of samples. The optimal hypothesis $h^*$ that maximizes the utility is replaced by the model parameters $\theta$ minimizing the loss function, and the selection of demographic group $s$ that minimizes the utility is replaced by the reweighting of each demographic group with a learning weight $\lambda_s$. When the demographic groups are unknown to us, people learn to predict $s$ as a function of $x$ and $y$, or simply assign weights $\lambda_i$ to each sample based on features and labels.

The main difference between predicting $s$ and directly predicting $\lambda_i$ is whether a group partition is enforced. The former surrogate grouping methods mandate identifying the demographic groups and then assigning weights to different groups. The latter ones, such as the Adversarial Reweighted Learning (ARL) approach, directly assign weight generated by an adversary model to each sample. 

Since noisy data is an outlier to the true data distribution, it is difficult for a model to learn it correctly. To maximize the loss, the adversary will increase its sample weight to be unreasonably large. When generating weight for each sample instead of considering groups, this problem will be enlarged. In light of this issue, we propose to leverage the advantage of graph learning, which allows the interactions among connected neighbors to alleviate the problems with noisy labels. By learning from a graph, the weight generation of every sample will be dependent on the interdependencies between others with similar demographics. Assuming that most samples are not outliers, each outlier should be connected to some samples to avoid assigning large weights for outliers. Based on this notion, we propose to construct a graph where each sample is connected to the KK nearest samples. 

In this paper, we propose a novel learning framework to address the deficiencies of existing approaches. The overall framework is shown in Figure~\ref{fig:diagram}. Upon the observation that hard boundaries for group partition can be intractable with too many unknown demographics, we propose a method that mimics the grouping effect to alleviate the noises but does not specify boundaries. In detail, our method can reformulate Eq. 7 as follows
\begin{align}
\label{eq_first}
J(\theta, \phi)&=\min_{\theta} \max_{\phi} \sum_{i=1}^n  \lambda_i(x, y, g; \phi) \cdot L (h(x_i;\theta),y_i),
\end{align} 
where $\lambda$ is an adversary network powered by graph convolutional network and $g$ is the undirected last-layer gradients of all samples. $L$ is the cross-entropy loss as introduced in Eq.3. $x=[x_1, ..., x_n], y=[y_1, ..., y_n]$. For $g=[g_1, ..., g_n]$, the calculation of $g_i \in \R^{DM}$ for each sample $i \in \{ n \}$ is
\begin{align}
g_i &= \text{Flatten} \left( z(x_i) \times |h(x_i)-y_i| \right),
\end{align} 
where $h(x_i) \in \R^M$ denotes the prediction of the learner model and $y_i \in \R^M$ denotes the true label. $z(x_i) \in \R^D$ denotes the latent representation of the learner model before the last layer. 
The calculation of $\lambda$ is
\begin{align}
\lambda'(x,y,g) &= f_\phi(H, A) \in \R^n, \\
\lambda_i(x, y, g) &= \frac{n \cdot \lambda'_i}{\sum_{j=1}^n \lambda'_j},
\end{align} 
where $H=E_g+E_{x}$. Here, $E_g=gW^{(0)} \in \R^{n \times r}$ is the embedding of gradients, with $W^{(0)} \in \R^{DM \times r}$. $E_{x}=xW^{(1)} \in \R^{n \times r}$ is the data embedding, with $W^{(1)} \in \R^{t' \times r}$ where $t'$ denotes the number of dimensions of $x$. We use $\lambda_i \in \lambda$ to represent the weight for sample $i$, which is normalized according to Eq. 11. On the other hand, $A \in \R^{n \times n}$ is the adjacency matrix for constructing the Graph of Gradients (GoG), which is calculated as follows for a certain entry $A_{u, i}$
\begin{align}
A_{u, i} = \begin{cases}
1 & \text{if} \ dist(g_u, g_i) \geq dist(g_u)_{k} \\
0 & \text{if} \ dist(g_u, g_i) < dist(g_u)_{k}
\end{cases} ,
\end{align} 
where $dist(\cdot, \cdot)$ is the Euclidean distance between the gradients of two samples, $dist(\cdot)_k$ denotes the distance between the gradient of the sample and the gradient of its K-th nearest neighbors among the gradients of other samples. 

The calculation of $f_\phi$ can be represented as
\begin{align}
f(H, A) = \sigma(AHW^{(2)}) \in \R^{n \times 1}, 
\end{align} 
which is a one-layer graph convolutional network that takes each sample's data as input and outputs the sample weight. Therefore, $\phi=[W^{(0)}, W^{(1)}, W^{(2)}]$. To generate the weights, each sample, which is a node in the graph, aggregates the information of similar samples with K-nearest gradients. Here, $W^{(2)} \in \R^{r \times 1}$ allows the graph network to learn the importance of neighboring samples for aggregation, and $\sigma$ denotes the activation function. In practice, we implement the above equation via its improved version, such as
\begin{align}
\label{eq_last}
f(H, A) = \sigma(\hat{D}^{-\frac{1}{2}} \hat{A} \hat{D}^{-\frac{1}{2}} H W^{(2)}) \in \R^{n \times 1},
\end{align} 
where $\hat{A}=A+I$ where $I$ is the identity matrix and $\hat{D}$ is the diagonal node degree matrix of $\hat{A}$. 
The learning of GoG aims at enforcing a grouping mechanism based on the prior from the learner, without knowing the actual demographic groups a priori. When performing reweighting to minimize group disparity, samples that share similar gradients are connected and can interact with each other. 

The overall algorithm is detailed in Algorithm \ref{alg}. 
The overall complexity is linear. Therefore, it does not significantly increase the time complexity of the original learner model, which is also of an O(n) complexity. The K-nearest neighbor algorithm costs $O(dnB)$, where $d$ is the dimensionality of the input sequence, $B$ is the batch size, $n$ is the number of data samples. Because for each batch, the complexity is $O(dB^2)$, and there are $n/B$ batches in total. Then, the one-layer GCN has the complexity of $O(Knd + ndF)$ in total, where $|E|=Kn$ is the number of edges in the graph (according to the KNN algorithm), and $F$ is the dimensionality of the output for each node. Note that, only $n$ is a scalable variable of interest that grows with more data samples. Therefore, in total, the complexity of GoG is $O(dnB) + O(Knd + ndF)$, which is overall of O(n).

\begin{algorithm}
    \caption{Fairness without demographics with the graph of gradients.}
    \small
    \label{alg}
    \begin{algorithmic}[1]
        \REQUIRE The labelled data $x = \left[x_1, x_2, ..., x_t\right]$ and $y = \left[y_1, y_2, ..., y_t\right]$, the demographic sensitive attributes $a = \left[a_1, a_2, ..., a_t\right]$; the learner model $h_{\theta}$, the adversary model $\lambda_{\phi}$.
        \WHILE{not convergence}
            \FOR{i=0:n}
                \STATE Compute and record $h_{\theta}(x_i)$, $z(x_i)$ and $g_i$;
                \STATE Compute and record $L (h(x_i; \theta),y_i)$;
            \ENDFOR
            \STATE Compute $H$, $A$ and $\lambda_{\phi}(x, y, g)$ by Eqs.(\ref{eq_first}-\ref{eq_last});
            \STATE Compute $J(\theta, \phi)=\sum_{i=0}^n  \lambda(x, y, g; \phi)_i \cdot L (h(x_i; \theta),y_i)$;
            \STATE Fix $\theta$ and optimize $\phi$ by maximizing $J(\theta, \phi)$;
            \STATE Fix $\phi$ and optimize $\theta$ by minimizing $J(\theta, \phi)$;
        \ENDWHILE
        \RETURN $h_{\theta}$, which will be a fair model.
    \end{algorithmic}
\end{algorithm}

\section{Experiments}
\label{exp}
In this section, we conduct extensive experiments to verify the effectiveness of our method. Code is available\footnote{\url{https://github.com/yingtaoluo/Graph-of-Gradient.git}}.

\subsection{Experimental Setup}
We randomly divide each dataset by samples into the training, validation, and testing sets in a 0.75:0.1:0.15 ratio. 
We tune all the hyperparameters to obtain the optimal evaluation on the validation set for each model. The range of learning rate is \{1e-2, 3e-3, 1e-3\}, batch size is \{32, 64, 128\}, hidden dimension is \{16, 32, 64\}, dropout rate is \{0,1, 0.5\}, K is \{3, 10, 30\}. The training will stop if the accuracy of the worst group validation metrics does not increase in 20 epochs, and the test performance will be recorded. All results are averaged under five random seeds. 

We evaluate our method on three diverse datasets. COMPAS is a small-sized tabular dataset, BNP is a larger-sized tabular dataset with 20x more samples, and MIMIC-III is a large-scale sequential dataset with 100x more samples with noise and sparsity due to missing values. All datasets are \textbf{highly imbalanced}.
\begin{itemize}
\item \textbf{COMPAS Dataset} The Correctional Offender Management Profiling for Alternative Sanctions Dataset\footnote{\url{https://www.propublica.org/datastore/dataset/compas-recidivism-risk-score-data-and-analysis}} is a public criminology dataset containing the risk of recidivism. There are 7214 samples and 52 attributes. We use an MLP as the baseline. Sex, age, and race are protected attributes.

\item \textbf{BNP Dataset} The BNP Paribas Cardif Claims Management Dataset \footnote{\url{https://www.kaggle.com/c/bnp-paribas-cardif-claims-management/data}} is a public credit assessment dataset from BNP Paribas Cardif. There are 23 Categorical and 108 continuous attributes with 114321 data samples. We use a simple MLP as the baseline. Based on the $\chi^2$ statistics, v22, v56, v79, and v113 are selected as protected attributes. 

\item \textbf{MIMIC-III Dataset} The Medical Information Mart for Intensive Care database\footnote{\url{https://physionet.org/content/mimiciii/1.4/}} \cite{johnson2016mimic} has patients who stayed in the critical care units of the Beth Israel Deaconess Medical Center between 2001 and 2012. There are 53423 patients and 651048 diagnosis codes. The goal is to predict future diagnoses for multi-class classification. We follow the data pipeline in \cite{luo2022deep, luo2024fairness} and use LSTM as the baseline. Sex, age, and race are protected attributes.
\end{itemize}

\begin{table}[t]
\centering
  \caption{Performances of fairness algorithms, evaluated by Worst-Group AUC/Accuracy, Equalized Odds, and Overall AUC/Accuracy, averaged over five random seeds.}
  \label{tab:main1}
  \begin{adjustbox}{width=0.48\textwidth}
\begin{tabular}{ccccc}
\toprule
Dataset & Approach &  W. Acc($\uparrow$)         & E. Odds($\downarrow$)          & O. Acc($\uparrow$) \\
\midrule
COMPAS & Baseline       & 41.05$\pm$0.22\%       & 42.10$\pm$2.47\%        & 64.32$\pm$0.35\%  \\
COMPAS & DRO         & 44.20$\pm$0.42\%      & 31.66$\pm$2.98\%         & 56.13$\pm$0.45\%  \\
COMPAS & ARL         & 44.32$\pm$0.39\%      & 31.89$\pm$3.36\%         & 56.24$\pm$0.41\%  \\
COMPAS & FairCB         & 43.46$\pm$0.14\%      & 33.73$\pm$1.53\%         & $\textbf{57.32}$$\pm$0.11\%  \\
COMPAS & FairRF         & 44.24$\pm$0.18\%      & 31.54$\pm$1.88\%         & 56.67$\pm$0.21\%  \\
COMPAS & GoG        & $\textbf{44.85}$$\pm$0.37\%       & $\textbf{29.13}$$\pm$2.53\%        & 57.15$\pm$0.43\%       \\
\midrule
BNP & Baseline       & 48.25$\pm$0.20\%       & 47.73$\pm$0.84\%        & 73.28$\pm$0.32\%  \\
BNP & DRO         & 49.10$\pm$0.24\%      & 39.47$\pm$0.98\%         & 68.62$\pm$0.42\%  \\
BNP & ARL         & 48.92$\pm$0.20\%      & 39.14$\pm$1.36\%         & 68.80$\pm$0.38\%  \\
BNP & FairCB         & 48.00$\pm$0.31\%      & 42.56$\pm$1.55\%         & 68.35$\pm$0.14\%  \\
BNP & FairRF         & 48.79$\pm$0.24\%      & 40.89$\pm$1.68\%         & 68.96$\pm$0.31\%  \\
BNP & GoG        & $\textbf{50.27}$$\pm$0.21\%       & $\textbf{33.60}$$\pm$1.47\%        & $\textbf{69.55}$$\pm$0.51\%       \\
\midrule
MIMIC-III & Baseline       & 19.32$\pm$0.20\%       & 29.28$\pm$1.45\%        & 24.64$\pm$0.29\%  \\
MIMIC-III & DRO         & 19.68$\pm$0.14\%      & 25.75$\pm$2.19\%         & 22.62$\pm$0.22\%  \\
MIMIC-III & ARL         & 19.57$\pm$0.10\%      & 25.14$\pm$1.90\%         & 22.80$\pm$0.18\%  \\
MIMIC-III & FairCB         & 19.08$\pm$0.38\%      & 29.96$\pm$1.95\%         & 23.01$\pm$0.47\%  \\
MIMIC-III & FairRF         & 19.59$\pm$0.14\%      & 26.05$\pm$1.70\%         & 22.96$\pm$0.21\%  \\
MIMIC-III & GoG        & $\textbf{20.46}$$\pm$0.29\%       & $\textbf{18.99}$$\pm$1.72\%        & $\textbf{23.25}$$\pm$0.31\%       \\
\bottomrule
\end{tabular}
\end{adjustbox}
\end{table}

We compare our method that uses the graph of gradients with the following fairness-without-demographics models that use features for weight generation or clustering. 
\begin{itemize}
\item \textbf{DRO}: \cite{hashimoto2018fairness} A fair algorithm that uses $\chi^2$-divergence to discover and minimize the worst-case distribution repeatedly.
\item \textbf{ARL}: \cite{lahoti2020fairness} A fair algorithm that leverages computational identifiability to learn the demographics from features and labels for the Max-Min fairness objective.
\item \textbf{FairCB}: \cite{yan2020fair} A fair algorithm that performs cluster-based oversampling that identifies the subgroups and removes class bias in data.
\item \textbf{FairRF}: \cite{zhao2022towards} A fair algorithm that minimizes the correlation between data features and model predictions with importance weighting.
\end{itemize}

We evaluate fairness based on the following metrics. We are aware of the many possible fairness metrics for evaluation, such as disparate imapct. However, first, we are only interested in improving the model performance across protected groups instead of the results; second, in medical datasets, as pointed out by clinicians \cite{chow2012disparate}, since different patients do have different genotypes and phenotypes, it can be wrong to assume different patients to have similar probabilities for a certain disease. Therefore, we only use worst-case accuracy and Equalized odds in our experiments to avoid possible concerns. We also report the overall model performances to show the trade-off between fairness and performance.
\begin{itemize}
\item \textbf{Worst-Group AUC/Accuracy}: For COMPAS and BNP, we adopt the Area Under the Curve (AUC) for the worst subgroup. For MIMIC-III, we adopt the top-$20$ accuracy of the worst subpopulation. We abbreviate it as \textit{W. Acc}. 
\item \textbf{Equalized Odds}: We use the equalized odds (E. Odds), which requires the probability of instances with any two protected attributes $i$, $j$ being assigned to an outcome $k$ are equal, given the label.
In particular, for MIMIC-III, we cluster labels into eighteen diagnosis categories based on the ICD-9 code categories\footnote{\url{https://en.wikipedia.org/wiki/List_of_ICD-9_codes}} to calculate E. Odds. We calculate the total difference of all groups $S$ as
\begin{align}
\triangle_{EO}=\sum_{i, j}|E(\hat{y}|S=i,y=k)=E(\hat{y}|S=j,y=k)|.
\end{align}
\item \textbf{Overall AUC/Accuracy}: In addition to fairness metrics, we also present the AUC (for COMPAS and BNP) and/or Accuracy (for MIMIC-III) for all populations, abbr. \textit{O. Acc}.
\end{itemize}

\begin{figure*}[t]
\centering
\includegraphics[width=0.9\linewidth]{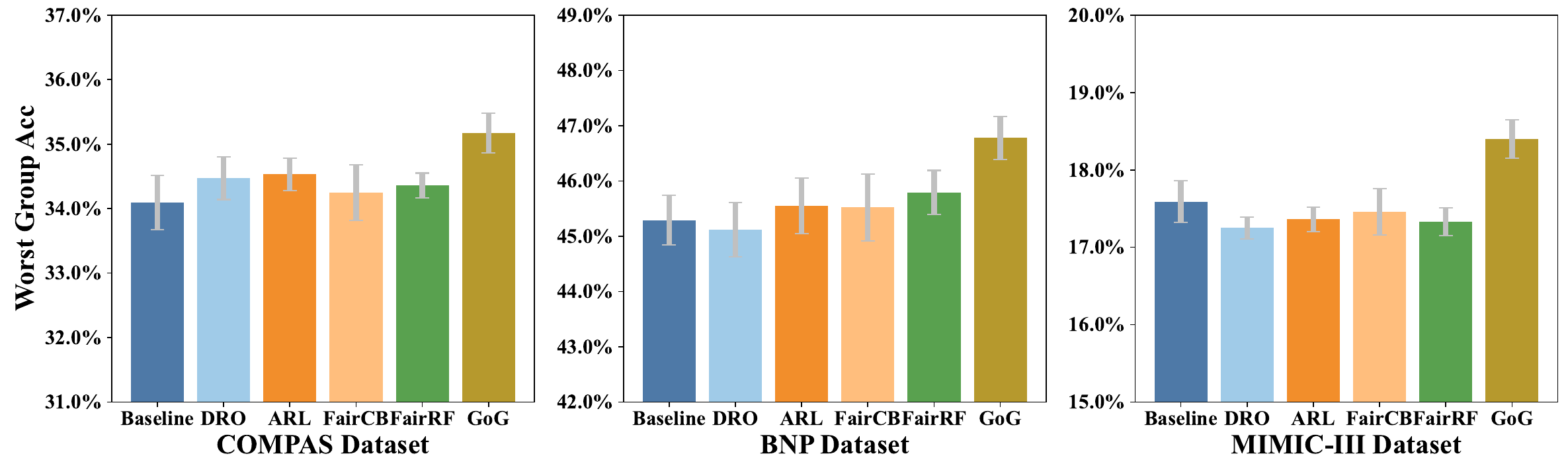}
\caption{Robustness study of different fairness algorithms across three datasets, evaluated by the Worst-Group AUC/Accuracy. }
\label{fig:robust}
\end{figure*}

\begin{table}[t]
\centering
  \caption{Ablation study of the graph and gradient components used in our algorithm. }
  \label{tab:main2}
\setlength{\tabcolsep}{1mm}{
\begin{adjustbox}{width=0.45\textwidth}
\begin{tabular}{ccccc}
\toprule
Dataset & Approach &  W. Acc($\uparrow$)         & E. Odds($\downarrow$)          & O. Acc($\uparrow$) \\
\midrule
COMPAS & GoG        & $\textbf{44.85}$$\pm$0.37\%       & $\textbf{29.13}$$\pm$2.53\%        & 57.15$\pm$0.40\%       \\
COMPAS & -Graph      & 44.38$\pm$0.35\%       & 29.14$\pm$2.50\%        & 56.22$\pm$0.43\%       \\
COMPAS & -Grad       & 44.36$\pm$0.40\%       & 29.52$\pm$2.23\%        & $\textbf{57.21}$$\pm$0.23\%       \\
\midrule
BNP & GoG        & $\textbf{50.27}$$\pm$0.21\%       & $\textbf{33.60}$$\pm$1.47\%        & $\textbf{69.85}$$\pm$0.51\%       \\
BNP & -Graph       & 49.33$\pm$0.13\%       & 38.35$\pm$1.15\%        & 68.54$\pm$0.14\%  \\
BNP & -Grad         & 49.98$\pm$0.09\%      & 35.29$\pm$1.19\%         & 69.81$\pm$0.06\%  \\
\midrule
MIMIC-III & GoG        & $\textbf{20.46}$$\pm$0.29\%       & $\textbf{18.99}$$\pm$1.72\%        & $\textbf{23.25}$$\pm$0.31\%       \\
MIMIC-III & -Graph       & 19.98$\pm$0.25\%       & 24.27$\pm$0.56\%        & 22.82$\pm$0.27\%  \\
MIMIC-III & -Grad         & 20.31$\pm$0.21\%      & 20.08$\pm$0.42\%         & 23.14$\pm$0.26\%  \\
\bottomrule
\end{tabular}
\end{adjustbox}}
\end{table}

\subsection{Performance Comparison}
We conduct performance comparison among baseline models and several fairness algorithms on the three datasets, with results shown in Table. \ref{tab:main1}.
Overall, Table \ref{tab:main1} demonstrates that our algorithm significantly enhances the fairness of machine learning models, with the \textit{W.Acc} and \textit{E. Odds} outperforming other fairness algorithms. Furthermore, while all fairness algorithms result in reduced accuracy, our algorithm surpasses them in overall accuracy, indicating a superior balance between fairness and accuracy. GoG demonstrates to work well with different deep learning baselines, such as MLP for COMPAS and BNP and LSTM for MIMIC-III in our cases.

Notably, we observe that our method's improvements are more significant in complex datasets. MIMIC-III and BNP datasets contain more complex structures, more demographic subgroups, attributes, and data samples than the COMPAS dataset and are likely to have more outliers, as mentioned in their sources. In particular, the noises in data can damage the fairness performance severely. 

In addition, we provide an empirical time cost comparison. With an A100 80G graphical card, on MIMIC-III dataset, running the learner costs 2.7 seconds per epoch, while running the GoG costs 6.4 seconds per epoch; on the COMPAS dataset, it is 0.6 seconds against 1.6 seconds; on BNP dataset, it is 1.5 seconds against 3.3 seconds. The additional training time increases at most by twice, and the inference speed is the same, since the weighting provided by GoG only happens during training. 

\subsection{Robustness Study}
To assess our model's robustness to noise, we create noisy datasets by altering the labels of 10\% of samples in the original datasets, treating these false labels as true during evaluation. Samples with wrong labels are harder to learn due to the deviation from the data distribution, which forces the fairness algorithm to focus blindly on the samples that cannot improve the model fairness for real samples. 
We train various models on the noisy datasets to evaluate fairness, with results shown in Figure. \ref{fig:robust}. Some baselines can occasionally perform even worse than the baseline, which shows that noisy labels can severely damage existing fairness algorithms. Our model's fairness metrics do not degrade as quickly as other fairness algorithms under noise, showing great robustness.

\begin{table}[t]
\centering
  \caption{Additional study of hyperparameter K for GoG. }
  \label{tab:k}
\setlength{\tabcolsep}{1mm}{
\begin{adjustbox}{width=0.40\textwidth}
\begin{tabular}{ccccc}
\toprule
Dataset & K &  W. Acc($\uparrow$)         & E. Odds($\downarrow$)          & O. Acc($\uparrow$) \\
\midrule
COMPAS & 3        & $\textbf{44.85}$$\pm$0.37\%       & 29.13$\pm$2.53\%        & $\textbf{57.15}$$\pm$0.40\%       \\
COMPAS & 10      & 44.63$\pm$0.32\%       & 29.10$\pm$2.37\%        & 57.12$\pm$0.35\%       \\
COMPAS & 30       & 44.67$\pm$0.29\%       & $\textbf{29.17}$$\pm$2.51\%        & 57.12$\pm$0.37\%       \\
\midrule
BNP & 3        & $\textbf{50.27}$$\pm$0.21\%       & 33.60$\pm$1.47\%        & $\textbf{69.85}$$\pm$0.51\%       \\
BNP & 10       & 50.02$\pm$0.18\%       & 33.46$\pm$1.52\%        & 69.69$\pm$0.66\%  \\
BNP & 30         & 50.22$\pm$0.19\%      & $\textbf{33.68}$$\pm$1.86\%         & 69.68$\pm$0.56\%  \\
\midrule
MIMIC-III & 3        & $\textbf{20.46}$$\pm$0.29\%       & 18.99$\pm$1.72\%        & $\textbf{23.25}$$\pm$0.31\%       \\
MIMIC-III & 10       & 20.35$\pm$0.28\%       & $\textbf{19.00}$$\pm$1.67\%        & 23.21$\pm$0.33\%  \\
MIMIC-III & 30         & 20.42$\pm$0.29\%      & 18.92$\pm$1.63\%         & 23.22$\pm$0.25\%  \\
\bottomrule
\end{tabular}
\end{adjustbox}}
\end{table}

\begin{table}[t]
\centering
  \caption{Additional study of last versus first layer for GoG. }
  \label{tab:layer}
\setlength{\tabcolsep}{1mm}{
\begin{adjustbox}{width=0.42\textwidth}
\begin{tabular}{ccccc}
\toprule
Dataset & Layer &  W. Acc($\uparrow$)         & E. Odds($\downarrow$)          & O. Acc($\uparrow$) \\
\midrule
COMPAS & Last       & $\textbf{44.85}$$\pm$0.37\%       & $\textbf{29.13}$$\pm$2.53\%        & $\textbf{57.15}$$\pm$0.40\%       \\
COMPAS & First       & 44.56$\pm$0.25\%       & 30.35$\pm$2.78\%        & 57.10$\pm$0.66\%       \\
\midrule
BNP & Last        & $\textbf{50.27}$$\pm$0.21\%       & $\textbf{33.60}$$\pm$1.47\%        & $\textbf{69.85}$$\pm$0.51\%       \\
BNP & First        & 49.89$\pm$0.25\%       & 34.92$\pm$1.32\%        & 69.31$\pm$0.61\%  \\
\midrule
MIMIC-III & Last        & $\textbf{20.46}$$\pm$0.29\%       & $\textbf{18.99}$$\pm$1.72\%        & $\textbf{23.25}$$\pm$0.31\%       \\
MIMIC-III & First        & 20.39$\pm$0.28\%       & 19.20$\pm$1.58\%        & 23.16$\pm$0.30\%  \\
\bottomrule
\end{tabular}
\end{adjustbox}}
\end{table}

\subsection{Ablation Study}
We conduct an ablation study to investigate the components of our method, which consists of 1) the constructing and learning the graph of gradients (GoG), and 2) the use of gradients as a substitute for features to represent unknown demographics. We assess the importance of each component by reporting the fairness metrics when a particular part is removed. The ablation results, shown in Table \ref{tab:main2}, indicate that both components contribute positively, as the model without graph or gradients still outperforms baselines. Furthermore, we observe that graph learning (-Grad) plays a more critical role, particularly for the more complex BNP and MIMIC-III datasets. GoG contributes to the improved trade-off between fairness and accuracy.
In addition, we also conduct a hyperparameter sensitivity study for $K$ in Table \ref{tab:k}, which shows that $K$ does not affect performance. When we use first-layer gradient instead of the last-year gradient, as shown in Table \ref{tab:layer}, the performance decreases slightly. This demonstrates the effectiveness of last-layer gradient. 

\subsection{Interpretability Study}
Figure \ref{fig:tsne} shows that while it is almost impossible to separate people by race based on input features shown in Figure \ref{fig:tsne}a, we find that the least populated subpopulation group, colored by green, is distributed closely on the graph of last-layer gradient shown in Figure \ref{fig:tsne}b. Last-layer gradient is distributed more evenly in the space, which helps the classification of groups. This case study supports our claim that gradient is a better proxy for true demographics.

\begin{figure}[t]
\centering
\includegraphics[width=0.95\linewidth]{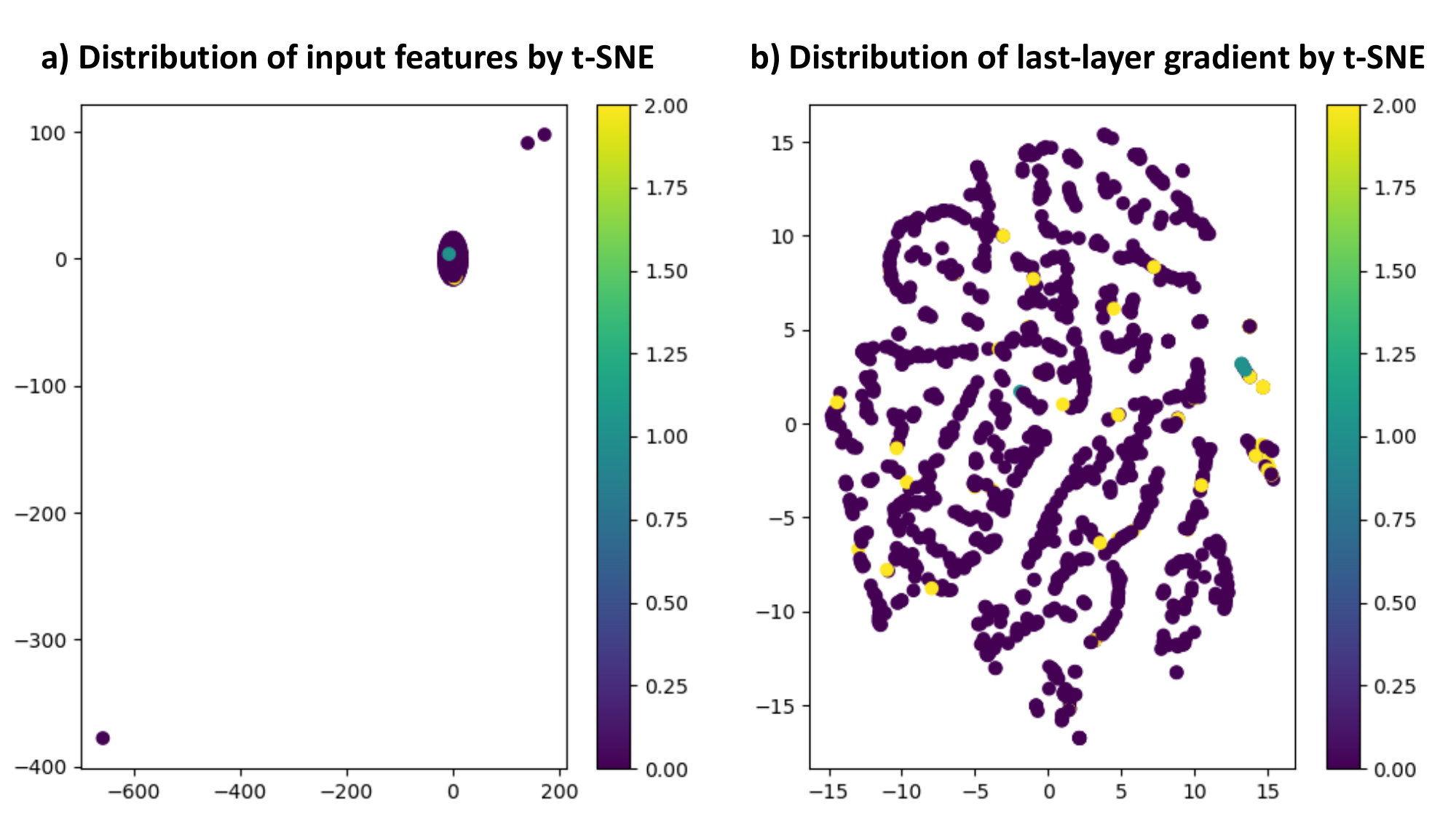}
\caption{The distribution of input features and last-layer gradients upon different race groups for MIMIC-III dataset (Purple: 'White', Yellow: 'BLACK', Green: 'HISPANIC', Other races are not considered for convenience).}
\label{fig:tsne}
\end{figure}

\section{Conclusion}

In this study, we address the fairness concerns in machine learning algorithms, focusing on the challenges posed by the complex interplay of demographic variables and regulatory constraints, which often render demographic information unknown. We present a novel approach to tackle the limitations of existing methods. We demonstrate that model gradient can better identify unknown demographics and propose the graph of gradients by connecting each sample to its K-nearest neighbors to identify demographic groups and generate sample weights. Experimental results reveal that our method significantly enhances the algorithmic fairness, exhibiting notable robustness to noise that can be common in the real world. Since GoG depends on the gradients of learner, there might be a limitation if the learner cannot effectively capture the data pattern, e.g., data is not big enough and model is not strong enough.

\bibliographystyle{ACM-Reference-Format}
\bibliography{ref}

\end{document}